\documentclass[conference]{IEEEtran}
%

\newcommand{\beq}{\begin{equation}}
\newcommand{\enq}{\end{equation}}
\newcommand{\beqa}{\begin{eqnarray}}
\newcommand{\enqa}{\end{eqnarray}}

\newcommand{\be}{\beta}

\newcommand{\qed}{\hfill $\Box$}

\newtheorem{theorem}{Theorem}
\newenvironment{proof}{{\sl Proof\/}:\ \ }{\qed\vspace{\baselineskip}}

\def\bbC{{\sf C}\kern -6pt {\sf C}}
\def\bbF{{\sf F}\kern -5pt {\sf F}}
\def\bbR{{\sf R}\kern -6pt {\sf R}}
\def\bbZ{{\sf Z}\kern -5pt {\sf Z}}
\def\sfbegin{\begingroup\sf}
\def\sfend{\endgroup}

\def\be{\begin{eqnarray*}}
\def\ee{\end{eqnarray*}}

\usepackage[final]{graphicx}
\usepackage[reqno]{amsmath}
\usepackage{amssymb}
\usepackage{wrapfig}
\usepackage{subfig}
\usepackage{epstopdf,comment,color}
\IEEEoverridecommandlockouts

\ifCLASSINFOpdf
\else
\fi
\hyphenation{op-tical net-works semi-conduc-tor}

\begin{document}
%
\title{On Randomized Distributed Coordinate Descent with Quantized Updates}

\author{\IEEEauthorblockN{Mostafa El Gamal}
\IEEEauthorblockA{Dept. of Electrical and\\Computer Engineering\\
Worcester Polytechnic Institute\\
Worcester, Massachusetts\\
Email: melgamal@wpi.edu}
\and
\IEEEauthorblockN{Lifeng Lai}
\IEEEauthorblockA{Dept. of Electrical and\\Computer Engineering\\
University of California, Davis\\
Davis, California\\
Email: lflai@ucdavis.edu}}


%


\maketitle

\begin{abstract}
In this paper, we study the randomized distributed coordinate descent algorithm with quantized updates. In the literature, the iteration complexity of the randomized distributed coordinate descent algorithm has been characterized under the assumption that machines can exchange updates with an infinite precision. We consider a practical scenario in which the messages exchange occurs over channels with finite capacity, and hence the updates have to be quantized. We derive sufficient conditions on the quantization error such that the algorithm with quantized update still converge. We further verify our theoretical results by running an experiment, where we apply the algorithm with quantized updates to solve a linear regression problem.
\end{abstract}

\begin{IEEEkeywords}
 Distributed optimization, coordinate descent, quantization error, convergence rate, quantization-free algorithm.
\end{IEEEkeywords}

\section{Introduction}\label{sec:intro}
Developing algorithms to solve distributed optimization problems has attracted significant interests in recent years. Distributed optimization problems naturally arise in various scenarios. For example, in machine learning problems, the training dataset might be too large to be stored in a single machine. Another example is when data is collected (and hence is naturally located) at multiple locations. Distributed algorithms are also useful to harness parallel processing capabilities of multiple machines.

In distributed optimization, it is essential for machines involved to exchange messages. As communication links between machines have limited capacity and have significantly longer delay, many recent papers focus on developing algorithms that are communication efficient. In \cite{martin}, an algorithm was proposed to reduce the amount of necessary communication by using the local computation in a primal-dual setting. Another communication efficient algorithm for empirical risk minimization was introduced in \cite{yuchen}. ADMM was considered in \cite{boyd,wei,hong} to handle the communication bottleneck.



Most of the existing studies analyze how many rounds of communications are required for the convergence of the developed algorithms. In each communication round, it is typically assumed that machines can exchange messages with an infinite precision. However, in practice, these data exchanges occur over physical channels that have limited capacity. As a result, machines cannot exchange messages with an infinite precision and need to quantize messages before sending them to other machines. A natural question to ask is whether these distributed algorithms will still converge if the exchanged messages are quantized. If these algorithms still converge, one can further ask what are the effects of the quantization on the converge speed.

In this paper, we answer these questions for a particular optimization algorithm, namely randomized coordinate descent \cite{wright}. This algorithm is easily implementable to solve distributed optimization problems since each machine can compute a single coordinate of the gradient. In each iteration of the randomized coordinate descent, the algorithm takes a step in the direction of a randomly chosen coordinate in order to decrease the function value. This is done by computing the partial derivatives, which is much cheaper computationally than taking a full gradient step. 
The iteration complexities of the randomized coordinate descent algorithms are analyzed in \cite{peter1,peter2} under a very general setup. In \cite{peter3}, a hybrid coordinate descent method (Hydra) was presented  to speed up the coordinate descent algorithm. Asynchronous parallel processing was analyzed in \cite{jordan} for a number of optimization algorithms including the randomized coordinate descent.

We answer the above questions by first modifying a distributed version of the coordinate descent algorithm to fit the paradigm of capacity limited communication. We then determine sufficient conditions on the quantization error such that the algorithm converges to the optimal solution. In particular, we apply our algorithm to an unconstrained minimization problem of a function $f$ that is $L$-smooth and $m$-strongly convex. We show that for an accuracy level $\epsilon$ and a confidence level $\rho$, our algorithm converges to the optimal solution if the quantization error $\Delta$ is upper bounded by a function of $\epsilon$, $\rho$, $L$, $m$, and $d$, where $d$ is the number of features. We verify the results by running an experiment, where we apply our algorithm to solve a linear regression problem.

The rest of the paper is organized as follows. We give a formal statement of the problem in Section~\ref{sec:formulation}. In Section~\ref{sec:algorithm} we introduce our algorithm. We analyze the convergence rate of our algorithm, and we derive sufficient conditions on the quantization error in Section~\ref{sec:convergence}. We verify our results by running an experiment in Section~\ref{sec:simulation}. Finally, we conclude the paper in Section~\ref{sec:conclusion}.
\section{Problem Formulation} \label{sec:formulation}
We consider an unconstrained convex minimization problem
\begin{eqnarray}
\min_{\mathbf x \in \mathbb R^d} f(\mathbf x),
\end{eqnarray}
where $\mathbf x=\{x_1,x_2,...,x_d\}$, and $f(\mathbf x)$ is an $L$-smooth and $m$-strongly convex function, such that for all $\mathbf x, \mathbf y \in \mathbb R^d$, we have that
\begin{eqnarray}
||\triangledown f(\mathbf x) - \triangledown f(\mathbf y)|| &\le& L ||\mathbf x-\mathbf y||, \label{eq:smooth} \\
\langle \triangledown f(\mathbf x) - \triangledown f(\mathbf y),\mathbf x-\mathbf y \rangle &\ge& m ||\mathbf x-\mathbf y||^2, \label{eq:convex}
\end{eqnarray}
where $L$ is the Lipschitz constant and $m$ is the strong convexity parameter. The condition number of $f$ is defined as $g=L/m$. As a result of the strong convexity, the function $f(\mathbf x)$ has a unique minimum at $\mathbf x^*$.

In the distributed coordinate descent algorithm, the data examples related to the problem are distributed over $d$ nodes such that each node can calculate one coordinate of the gradient $\triangledown f(\mathbf x)$ as explained in Section 6 of \cite{peter3}. The algorithm we study in this paper is the randomized coordinate descent, in which at each iteration a coordinate is randomly selected to be updated. There are different ways to randomly select the coordinate. In this paper, we focus on the case in which the coordinates are selected with a uniform distribution. The channels connecting machines are capacity limited with a quantization resolution of $\Delta$, which means that machine $i$ can only send a quantized version $Q\left(\frac {\partial f(\mathbf x)} {\partial x_{i}}\right)$ of its update $\frac {\partial f(\mathbf x)} {\partial x_{i}}$, such that
\begin{eqnarray} \label{eq:delta}
\resizebox{0.9\hsize}{!}{$Q\left(\frac {\partial f(\mathbf x)} {\partial x_{i}}\right)=y\Delta, ~\text{if}~(y-\frac 1 2)\Delta \le \frac {\partial f(\mathbf x)} {\partial x_{i}} < (y+\frac 1 2)\Delta$},
\end{eqnarray}
in which $Q(\cdot)$ is the quantization operator. Let $[\triangledown f(\mathbf x)]_{i} \in \mathbb R^d$ denote a vector that has only one nonzero element at position $i$ that is equal to $\frac {\partial f(\mathbf x)} {\partial x_{i}}$. By applying the quantization operator to the nonzero element of the vector $[\triangledown f(\mathbf x)]_{i} \in \mathbb R^d$, we can rewrite (\ref{eq:delta}) as
\begin{eqnarray} \label{eq:noise}
Q([\triangledown f(\mathbf x)]_{i})=[\triangledown f(\mathbf x)]_{i}-\mathbf n,
\end{eqnarray}
where $\mathbf n \in \mathbb R^d$ is the quantization noise vector. The noise vector $\mathbf n$ has only one nonzero element $n_i$ that is bounded as $|n_i| \le \Delta/2$. Hence,
\begin{eqnarray} \label{eq:quan}
||\mathbf n|| \le \frac \Delta 2.
\end{eqnarray}
Throughout the paper, we use $\mathbf x_k$ and $\mathbf x^q_k$ to denote the $k$th update of $\mathbf x$ before and after adding the quantization noise, respectively. An upper case letter $S$ is used for a random variable, while a lower case letter $s$ is used for a realization of $S$. We also use $||\mathbf x||$ to denote the Euclidean norm of the vector $\mathbf x$, and we use $Q(.)$ to denote the quantization operator.
\section{Quantized Randomized Coordinate Descent} \label{sec:algorithm}
Here, we describe the randomized coordinate descent algorithm with quantized update. The algorithm starts from an initial point $\mathbf x_0$, and stops after a predetermined number of iterations $T$. Set $\mathbf x_0^q=\mathbf x_0$. At iteration $(j+1)$, a machine $s_{j+1} \in \{1,2,...,d\}$ is randomly (with a uniform distribution) selected, who calculates $[\triangledown f(\mathbf x_j^q)]_{s_{j+1}}$ and then sends the quantized update $Q\left([\triangledown f(\mathbf x_j^q)]_{s_{j+1}}\right)$, all machines update
\begin{eqnarray}
\mathbf x_{j+1}^q=\mathbf x_j^q-tdQ([\triangledown f(\mathbf x_j^q)]_{s_{j+1}}),
\end{eqnarray}
where $t$ is the step size.

\begin{table}[htb]
\centering
\begin{tabular}{l}
{\bf Algorithm:} Quantized Randomized Coordinate Descent\\ 
\hline \hline
1: $\mathbf x_0^q=\mathbf x_0$\\
2: {\bf for} $j=0,1,...,(T-1)$ {\bf do}\\
3: a machine is randomly selected to send its update \\
4: selected machine $s_{j+1}$ computes $[\triangledown f(\mathbf x_j^q)]_{s_{j+1}}$ \\
5: machine $s_{j+1}$ communicates $Q([\triangledown f(\mathbf x_j^q)]_{s_{j+1}})$ \\
6. all machines update $\mathbf x_{j+1}^q=\mathbf x_j^q-tdQ([\triangledown f(\mathbf x_j^q)]_{s_{j+1}})$\\
7: {\bf end for}
\end{tabular}
\label{tb:algorithm}
\end{table}

To facilitate the analysis, we also record the sequence
\begin{eqnarray}
\mathbf x_{j+1}=\mathbf x_j^q-td [\triangledown f(\mathbf x_j^q)]_{s_{j+1}}.
\end{eqnarray}

Using (\ref{eq:noise}), we can show that
\begin{eqnarray}
\mathbf x_j^q=\mathbf x_j+td \mathbf n_j,~ j=\{1,2,...,T\}.
\end{eqnarray}

It is desirable that the algorithm converges within $k$ iterations to an accuracy level of $\epsilon$ and a confidence level of $\rho \in (0,1)$, such that
\begin{eqnarray} \label{eq:confidence}
\text{Pr}(||\mathbf x_k-\mathbf x^*||^2 \le \epsilon) \ge 1-\rho.
\end{eqnarray}
By applying Markov inequality, the convergence condition in (\ref{eq:confidence}) is achieved if
\begin{eqnarray} \label{eq:convergence}
\mathbb E||\mathbf x_k-\mathbf x^*||^2 \le \epsilon \rho.
\end{eqnarray}

\section{Convergence Analysis} \label{sec:convergence}

In this section, we analyze the convergence rate of the quantized randomized coordinate descent algorithm.
\begin{theorem}
Given that the quantization error $\Delta$ is bounded as following
\begin{eqnarray} 
\Delta \le \frac {\epsilon \rho L^2} {2m}(\frac 1 C_{min}-1), \nonumber
\end{eqnarray}
the number of iterations required for the quantized randomized coordinate descent algorithm to converge to the optimal solution $\mathbf x^*$ is at most
\begin{eqnarray}
k^q&=& \frac {\log (2||\mathbf x_0-\mathbf x^*||^2/\epsilon \rho)}{\log(1/C_{min})} \nonumber \\
&+& \frac {\log (2||\mathbf x_0-\mathbf x^*||^2)}{\log(1/(C_{min}+\frac {\epsilon \rho}{2}(1-C_{min}))}, \nonumber
\end{eqnarray}
where
\begin{eqnarray}
C_{min}=1- \frac 1 {g^2d}. \nonumber
\end{eqnarray}
\end{theorem}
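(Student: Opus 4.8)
The plan is to decouple the argument into a noiseless one-step contraction, a quantization-perturbation estimate, and an iteration count. First I would analyze a single step of the auxiliary (clean) sequence $\mathbf{x}_{j+1} = \mathbf{x}_j^q - td[\nabla f(\mathbf{x}_j^q)]_{s_{j+1}}$, conditioning on $\mathbf{x}_j^q$ and taking the expectation over the uniformly chosen coordinate $s_{j+1}$. Because each coordinate is picked with probability $1/d$, one has $\mathbb{E}_s[[\nabla f(\mathbf{x}_j^q)]_s] = \frac{1}{d}\nabla f(\mathbf{x}_j^q)$ and $\mathbb{E}_s\|[\nabla f(\mathbf{x}_j^q)]_s\|^2 = \frac{1}{d}\|\nabla f(\mathbf{x}_j^q)\|^2$. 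Expanding $\|\mathbf{x}_{j+1}-\mathbf{x}^*\|^2$ and substituting these, the cross term reduces to $-2t\langle\nabla f(\mathbf{x}_j^q),\mathbf{x}_j^q-\mathbf{x}^*\rangle$ and the quadratic term to $t^2 d\,\|\nabla f(\mathbf{x}_j^q)\|^2$.

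Next I would invoke strong convexity (\ref{eq:convex}), using $\nabla f(\mathbf{x}^*)=0$ to lower-bound the inner product by $m\|\mathbf{x}_j^q-\mathbf{x}^*\|^2$, and smoothness (\ref{eq:smooth}) to upper-bound $\|\nabla f(\mathbf{x}_j^q)\|^2$ by $L^2\|\mathbf{x}_j^q-\mathbf{x}^*\|^2$. This gives $\mathbb{E}_s\|\mathbf{x}_{j+1}-\mathbf{x}^*\|^2 \le (1 - 2tm + t^2 d L^2)\,\|\mathbf{x}_j^q-\mathbf{x}^*\|^2$. Minimizing the factor over $t$ selects the step size $t = m/(dL^2)$ and the contraction constant $1 - m^2/(dL^2) = 1 - 1/(g^2 d) = C_{min}$, exactly the claimed definition. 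This step fixes $t$ and is the backbone of the whole analysis.

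I would then fold in the quantization through the identity $\mathbf{x}_j^q = \mathbf{x}_j + td\,\mathbf{n}_j$ together with $\|\mathbf{n}_j\| \le \Delta/2$ from (\ref{eq:quan}). Writing $\mathbf{x}_{k-1}^q-\mathbf{x}^* = (\mathbf{x}_{k-1}-\mathbf{x}^*) + td\,\mathbf{n}_{k-1}$ and chaining it with the contraction above yields, after the triangle inequality and Jensen's inequality, a scalar recursion $e_k \le \sqrt{C_{min}}\,(e_{k-1} + td\Delta/2)$, where $e_k^2 = \mathbb{E}\|\mathbf{x}_k-\mathbf{x}^*\|^2$. Iterating this affine recursion shows that $e_k$ decays geometrically at rate $\sqrt{C_{min}}$ down to a noise floor proportional to $\Delta$, and the role of the hypothesis on $\Delta$ is precisely to push that floor below the target $\epsilon\rho$.

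Finally, to obtain the count I would combine the clean and the noise-degraded decay. Substituting $t=m/(dL^2)$, a short linearization shows that the hypothesis $\Delta \le \frac{\epsilon\rho L^2}{2m}(\frac{1}{C_{min}}-1)$ is exactly the condition under which the one-step factor $C_{min}(1 + td\Delta/2)^2$ at error level $e_{k-1}=1$ equals $C' := C_{min} + \frac{\epsilon\rho}{2}(1-C_{min})$; this identifies $\|\mathbf{x}-\mathbf{x}^*\|^2\approx 1$ as the scale at which quantization begins to slow the contraction. I would then upper-bound the total number of iterations by summing two contributions: the count at the degraded rate $C'$ to bring $\|\mathbf{x}_0-\mathbf{x}^*\|^2$ down to order one, giving $\log(2\|\mathbf{x}_0-\mathbf{x}^*\|^2)/\log(1/C')$, and the count at the clean rate $C_{min}$ to reach the target $\epsilon\rho$, giving $\log(2\|\mathbf{x}_0-\mathbf{x}^*\|^2/(\epsilon\rho))/\log(1/C_{min})$; together these are $k^q$. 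The bound $\mathbb{E}\|\mathbf{x}_k-\mathbf{x}^*\|^2\le\epsilon\rho$ then yields (\ref{eq:confidence}) via the Markov step in (\ref{eq:convergence}). The hard part will be the correlation term $\langle\mathbf{n}_{k-1},\mathbf{x}_{k-1}-\mathbf{x}^*\rangle$: the quantization error is deterministic given the current iterate and cannot be treated as zero-mean, so it must be controlled by a worst-case Cauchy--Schwarz bound, and the resulting affine (rather than purely multiplicative) recursion is what forces the two-stage count and pins down the exact threshold in the condition on $\Delta$.
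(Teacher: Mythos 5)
Your proposal is correct and follows essentially the same route as the paper: the coordinate-expectation contraction with $C=1-2tm+t^2L^2d$, the optimal step $t=m/(dL^2)$ giving $C_{min}=1-1/(g^2d)$, a worst-case Cauchy--Schwarz bound on the quantization cross-term, and a two-phase iteration count in which the rate degrades to $C_{min}+\frac{\epsilon\rho}{2}(1-C_{min})$ while $\|\mathbf x_j-\mathbf x^*\|\ge 1$ and the noise floor is pushed below $\epsilon\rho/2$ once inside the unit ball. Your repackaging of the perturbed recursion in terms of $e_k=\sqrt{\mathbb E\|\mathbf x_k-\mathbf x^*\|^2}$ is only a cosmetic variant of the paper's explicit case split on $\|\mathbf x_0-\mathbf x^*\|\lessgtr 1$, and you correctly identify the same threshold condition on $\Delta$.
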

\begin{proof}
We have that
\begin{eqnarray}
||\mathbf x_{j+1}-\mathbf x^*||^2 &=& ||\mathbf x_j^q-\mathbf x^*-td[\triangledown f(\mathbf x_j^q)]_{s_{j+1}}||^2 \nonumber \\
&=& ||\mathbf x_j^q-\mathbf x^*||^2+t^2d^2||[\triangledown f(\mathbf x_j^q)]_{s_{j+1}}||^2 \nonumber \\
&-&2td\langle [\triangledown f(\mathbf x_j^q)]_{s_{j+1}},\mathbf x_j^q-\mathbf x^* \rangle.
\end{eqnarray}
Taking the expectation of both sides with respect to the independent and identically distributed (i.i.d.) random variables $S_1,S_2, ...S_{j+1}$
\begin{eqnarray}
\mathbb E||\mathbf x_{j+1}-\mathbf x^*||^2 &=& \mathbb E||\mathbf x_j^q-\mathbf x^*||^2 \nonumber \\
&+&t^2d^2\mathbb E||[\triangledown f(\mathbf x_j^q)]_{s_{j+1}}||^2 \nonumber \\
&-&2td \mathbb E \langle [\triangledown f(\mathbf x_j^q)]_{s_{j+1}},\mathbf x_j^q-\mathbf x^* \rangle. \nonumber \\
\end{eqnarray}
Since $\mathbb E_{s_{j+1}}[\triangledown f(\mathbf x_j^q)]_{s_{j+1}}=\frac 1 d (\triangledown f(\mathbf x_j^q))$, then
\begin{eqnarray}
\mathbb E||\mathbf x_{j+1}-\mathbf x^*||^2 &=& \mathbb E||\mathbf x_j^q-\mathbf x^*||^2+t^2d\mathbb E||\triangledown f(\mathbf x_j^q)||^2 \nonumber \\
&-&2t\mathbb E\langle \triangledown f(\mathbf x_j^q),\mathbf x_j^q-\mathbf x^* \rangle. \label{eq:expectation}
\end{eqnarray}
By applying inequalities (\ref{eq:smooth}) and (\ref{eq:convex}), and using the fact that $\triangledown f(\mathbf x^*)=0$, we have that
\begin{eqnarray}
||\triangledown f(\mathbf x_j^q)|| \le L ||\mathbf x_j^q-\mathbf x^*||, \label{eq:smooth1}
\end{eqnarray}
and
\begin{eqnarray}
\langle \triangledown f(\mathbf x_j^q),\mathbf x_j^q-\mathbf x^* \rangle \ge m ||\mathbf x_j^q-\mathbf x^*||. \label{eq:convex1}
\end{eqnarray}
Substituting (\ref{eq:smooth1}) and (\ref{eq:convex1}) in (\ref{eq:expectation}), we get that
\begin{eqnarray} \label{eq:main}
\mathbb E||\mathbf x_{j+1}-\mathbf x^*||^2 &\le& C \mathbb E||\mathbf x_j^q-\mathbf x^*||^2,
\end{eqnarray}
where $C=t^2L^2d-2tm+1$. We also have that
\begin{eqnarray}
||\mathbf x_j^q-\mathbf x^*||^2&=&||\mathbf x_j-\mathbf x^*+td\mathbf n_j||^2 \nonumber \\
&=& ||\mathbf x_j-\mathbf x^*||^2 + t^2d^2||\mathbf n_j||^2 \nonumber \\
&+& 2td \langle \mathbf x_j-\mathbf x^*,\mathbf n_j \rangle \nonumber \\
&\le& ||\mathbf x_j-\mathbf x^*||^2 + t^2d^2||\mathbf n_j||^2 \nonumber \\
&+& 2td ||\mathbf x_j-\mathbf x^*||||\mathbf n_j|| \nonumber \\
&\le& ||\mathbf x_j-\mathbf x^*||^2 + td\Delta ||\mathbf x_j-\mathbf x^*|| \nonumber \\
&+& \frac {t^2d^2\Delta^2} 4,
\end{eqnarray}
where the first inequality follows from Cauchy-Schwarz inequality, and the second inequality follows from (\ref{eq:quan}).

To proceed with the convergence analysis, we have two different cases.

{\bf Case 1} ($||\mathbf x_0-\mathbf x^*|| \le 1$):

In this case, $\mathbb E||\mathbf x_j-\mathbf x^*|| \le 1$. Therefore,
\begin{eqnarray}
\mathbb E||\mathbf x_{j+1}-\mathbf x^*||^2 \le C\mathbb E||\mathbf x_j-\mathbf x^*||^2 + Ctd\Delta (1+\frac {td\Delta} 4).
\end{eqnarray}
Let $k_1$ denotes the minimum number of iterations required to achieve the convergence condition. Hence,
\begin{eqnarray}
\mathbb E||\mathbf x_{k_1}-\mathbf x^*||^2 &\le& C^{k_1}||\mathbf x_0-\mathbf x^*||^2 \nonumber \\
&+&Ctd\Delta (1+\frac {td\Delta} 4)(1+C+..+C^{k_1-1}). \nonumber \\
\end{eqnarray}
Since $C < 1$, then
\begin{eqnarray}
\mathbb E||\mathbf x_{k_1}-\mathbf x^*||^2 &\le& C^{k_1}||\mathbf x_0-\mathbf x^*||^2 \nonumber \\
&+&\frac {C} {1-C}td\Delta (1+\frac {td\Delta} 4).
\end{eqnarray}
For the algorithm to converge, let
\begin{eqnarray} \label{eq:iter1}
C^{k_1}||\mathbf x_0-\mathbf x^*||^2 \le \frac {\epsilon \rho} 2,
\end{eqnarray}
and
\begin{eqnarray} \label{eq:cond1}
\frac {C} {1-C}td\Delta (1+\frac {td\Delta} 4) \le \frac {\epsilon \rho} 2,
\end{eqnarray}

{\bf Case 2} ($||\mathbf x_0-\mathbf x^*|| > 1$):

Let $k_2$ denotes the minimum number of iterations required such that $\mathbb E||\mathbf x_{k_2}-\mathbf x^*|| \le 1$. For all $j \le k_2$, we have that $\mathbb E||\mathbf x_j-\mathbf x^*|| \le \mathbb E||\mathbf x_j-\mathbf x^*||^2$. Therefore,
\begin{eqnarray}
\mathbb E||\mathbf x_{j+1}-\mathbf x^*||^2 &\le& C(1+td\Delta)\mathbb E||\mathbf x_j-\mathbf x^*||^2 \nonumber \\
&+& \frac {C t^2d^2\Delta^2} 4.
\end{eqnarray}
After $k_2$ iterations, we have that
\begin{eqnarray}
\mathbb E||\mathbf x_{k_2}-\mathbf x^*||^2 &\le& (C(1+td\Delta))^{k_2}||\mathbf x_0-\mathbf x^*||^2 \nonumber \\
&+&\frac {C t^2d^2\Delta^2} {4 (1-C)}.
\end{eqnarray}
For the algorithm to converge, let
\begin{eqnarray} \label{eq:iter2}
(C(1+td\Delta))^{k_2}||\mathbf x_0-\mathbf x^*||^2 \le \frac 1 2,
\end{eqnarray}
and
\begin{eqnarray} \label{eq:cond2}
\frac {C t^2d^2\Delta^2} {4 (1-C)} \le \frac 1 2.
\end{eqnarray}
Finally, the total number of iterations required for convergence is given by
\begin{eqnarray}
k^q=k_1+k_2.
\end{eqnarray}

To achieve the fastest convergence rate, the step size $t$ is chosen to minimize $C$. Hence,
\begin{eqnarray}
t_{opt}=\frac 1 {gLd}~~,\text{and}~~C_{min}=1- \frac 1 {g^2d}
\end{eqnarray}

From (\ref{eq:cond1}) and (\ref{eq:cond2}), a sufficient condition on the quantization error is given by
\begin{eqnarray} \label{eq:cond3}
\Delta \le \frac {\epsilon \rho L^2} {2m}(\frac 1 C_{min}-1).
\end{eqnarray}

From (\ref{eq:iter1}), (\ref{eq:iter2}), and (\ref{eq:cond3}), the number of iterations required for the algorithm to converge is at most
\begin{eqnarray} \label{eq:iter3}
k^q&=& \frac {\log (2||\mathbf x_0-\mathbf x^*||^2/\epsilon \rho)}{\log(1/C_{min})} \nonumber \\
&+& \frac {\log (2||\mathbf x_0-\mathbf x^*||^2)}{\log(1/(C_{min}+\frac {\epsilon \rho}{2}(1-C_{min}))}.
\end{eqnarray}
\end{proof}

Note that By setting $\Delta=0$ and hence $x_j^q=x_j$ in (\ref{eq:main}), the quantization-free scenario can be recovered. It follows that the number of iterations required for the quantization-free algorithm to converge is at most
\begin{eqnarray} \label{eq:qfree}
k&=& \frac {\log (||\mathbf x_0-\mathbf x^*||^2/\epsilon \rho)}{\log(1/C_{min})},
\end{eqnarray}
which coincides with the result obtained in \cite{jordan}.

\section{Simulation Results} \label{sec:simulation}
In this section, we run an experiment to verify that the quantization error does not propagate and hence the convergence is possible. For that purpose, we apply the quantized randomized coordinate descent algorithm to solve a linear regression problem. The data set we use is collected from a power plant \cite{pinar}. It has four predictors (Temperature, Pressure, Humidity, and Exhaust Vacuum) and one output (Electrical Energy). All data is normalized to have zero mean and a standard deviation of one. The number of observations is $n=9568$. 

To solve this problem, it is required to minimize the square loss function
\begin{eqnarray}
f(\mathbf x)=\frac 1 2 \sum_{i=1}^n (y_i-A_{i:}\mathbf x)^2,
\end{eqnarray}
where $A$ is the data matrix, $A_{i:}$ is the $i$th row of $A$, and $\mathbf y$ is the output vector. Notice that the first column of $A$ is a vector of ones, which is added to evaluate for the intercept. The network consists of five nodes in addition to the fusion center; the first node calculates the derivative in the direction of the intercept coefficient, while each of the remaining nodes calculates the derivative in the direction of one predictor coefficient. The algorithm starts from $\mathbf x_0=[1,1,1,1,1]^T$ and iterates to reduce the coefficients residual $||\mathbf x_j-\mathbf x^*||^2$.

{\bf Experiment 1:} $t=10^{-4}$, $\Delta=10^5$.

First, we plot the coefficients residual against the number of iterations as shown in Fig.~\ref{Fig-1}.

\begin{figure}[htb]
\centering
\includegraphics[width=0.52\textwidth]{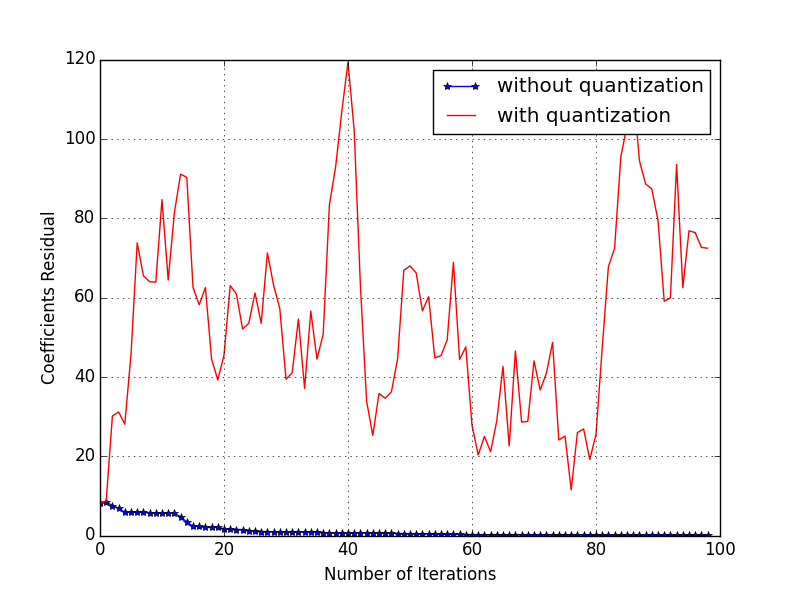}\\
\caption{Effect of the quantization error ($\Delta=10^5$) on the coefficients residual.}
\label{Fig-1}
\end{figure}

Then, we plot the predicted value for an input of all ones $A_1=[1,1,1,1,1]$ against the number of iterations as shown in Fig.~\ref{Fig-2}.

\begin{figure}[htb]
\centering
\includegraphics[width=0.52\textwidth]{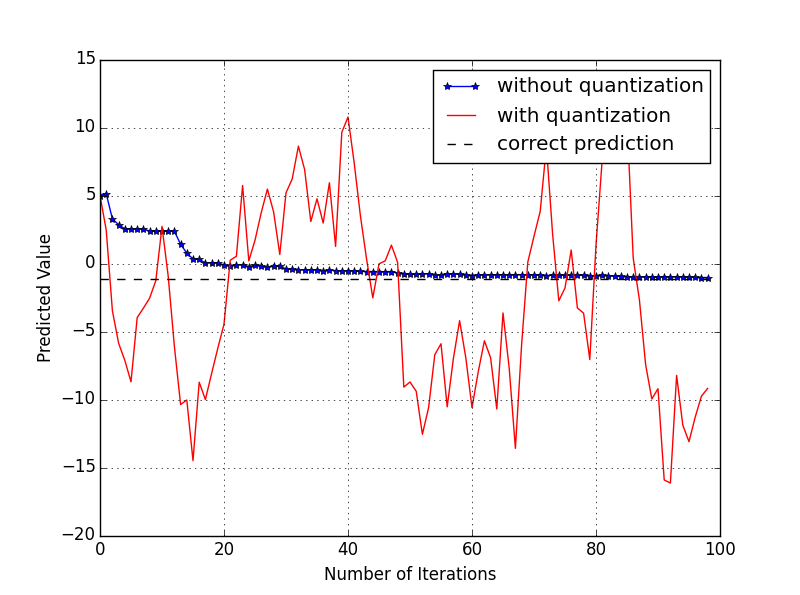}\\
\caption{Effect of the quantization error ($\Delta=10^5$) on the predicted value.}
\label{Fig-2}
\end{figure}

Figures~\ref{Fig-1} and \ref{Fig-2} show that the quantized randomized coordinate descent algorithm diverges if the quantization error $\Delta=10^5$. This result is intuitive since a large quantization error is expected to prevent the algorithm from converging to the optimal solution.

{\bf Experiment 2:} $t=10^{-4}$, $\Delta=10^3$.

First, we plot the coefficients residual against the number of iterations as shown in Fig.~\ref{Fig-3}.

\begin{figure}[htb]
\centering
\includegraphics[width=0.52\textwidth]{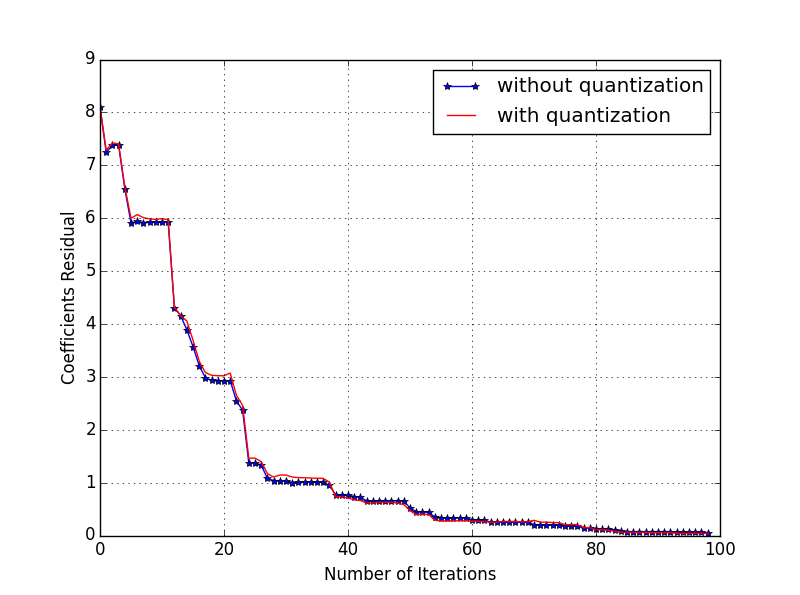}\\
\caption{Effect of the quantization error ($\Delta=10^3$) on the coefficients residual.}
\label{Fig-3}
\end{figure}

Then, we plot the predicted value for an input of all ones $A_1=[1,1,1,1,1]$ against the number of iterations as shown in Fig.~\ref{Fig-4}.

\begin{figure}[htb]
\centering
\includegraphics[width=0.52\textwidth]{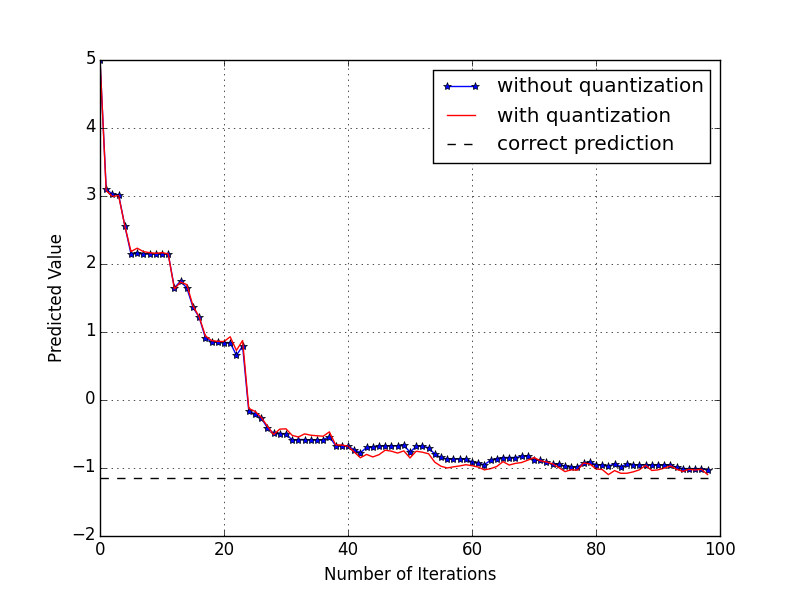}\\
\caption{Effect of the quantization error ($\Delta=10^3$) on the predicted value.}
\label{Fig-4}
\end{figure}

Figures~\ref{Fig-3} and \ref{Fig-4} show that the quantized randomized coordinate descent algorithm converges for a smaller value of the quantization error $\Delta=10^3$. This verifies that the quantization error does not propagate and hence the convergence is possible if the quantization error is bounded, which coincides with the result we obtained in Theorem 1.

\section{Conclusion} \label{sec:conclusion}
In this paper, we have studied the problem of distributed optimization under communication constraints. We have modified the randomized coordinate descent algorithm to solve an unconstrained convex minimization problem in the presence of quantization error. We have analyzed the convergence rate of our algorithm, and we have derived sufficient conditions on the quantization error to guarantee that the algorithm converges to the optimal solution. We have further verified that the convergence is possible in the presence of quantization error by running an experiment that solves a linear regression problem.

\bibliographystyle{ieeetr}
\bibliography{macros,icasp}

\end{document}